\pgfplotsset{compat=1.17}
\pgfplotsset{compat=1.17}
\newtheorem{lemma}{Lemma}
\newtheorem{theorem}{Theorem}
\def\BibTeX{{\rm B\kern-.05em{\sc i\kern-.025em b}\kern-.08em
    T\kern-.1667em\lower.7ex\hbox{E}\kern-.125emX}}
\begin{document}

\title{Edu-EmotionNet: Cross-Modality Attention Alignment with Temporal Feedback Loops\\
}

\author{\IEEEauthorblockN{S M Rafiuddin}
\IEEEauthorblockA{\textit{Department of Computer Science} \\
\textit{Oklahoma State University}\\
Stillwater, Oklahoma, USA \\
srafiud@okstate.edu}
}

\maketitle

\begin{abstract}
Understanding learner emotions in online education is critical for improving engagement and personalized instruction. While prior work in emotion recognition has explored multimodal fusion and temporal modeling, existing methods often rely on static fusion strategies and assume that modality inputs are consistently reliable, which is rarely the case in real-world learning environments. We introduce Edu-EmotionNet, a novel framework that jointly models temporal emotion evolution and modality reliability for robust affect recognition. Our model incorporates three key components- a Cross-Modality Attention Alignment (CMAA) module for dynamic cross-modal context sharing, a Modality Importance Estimator (MIE) that assigns confidence-based weights to each modality at every time step, and a Temporal Feedback Loop (TFL) that leverages previous predictions to enforce temporal consistency. Evaluated on educational subsets of IEMOCAP and MOSEI, re-annotated for confusion, curiosity, boredom, and frustration, Edu-EmotionNet achieves state-of-the-art performance and demonstrates strong robustness to missing or noisy modalities. Visualizations confirm its ability to capture emotional transitions and adaptively prioritize reliable signals, making it well suited for deployment in real-time learning systems \footnote{Accepted as a Regular Research Paper at ICMLA 2025}.
\end{abstract}

\begin{IEEEkeywords}
Multimodal Emotion Recognition, Temporal Modeling, Modality Reliability, Educational Affective Computing, Cross-Modal Attention, Robust Fusion, Emotion Dynamics, Online Learning Environments
\end{IEEEkeywords}

\section{Introduction}

The widespread adoption of virtual and hybrid learning platforms has transformed the educational landscape by enabling scalable, remote access to quality instruction. Platforms such as MOOCs, video lectures, and intelligent tutoring systems have democratized education globally. However, this digital shift has introduced a critical limitation: the absence of real-time, affective feedback that human instructors naturally rely on to monitor student engagement, comprehension, and emotional state. Emotions like confusion, frustration, curiosity, and boredom are key indicators of learning effectiveness and dropout risk \cite{calvo2010affect}. In traditional classrooms, instructors can respond to these cues dynamically, but such responsiveness is largely absent in online platforms.

\begin{figure}[ht]
  \centering
  \includegraphics[width=\linewidth]{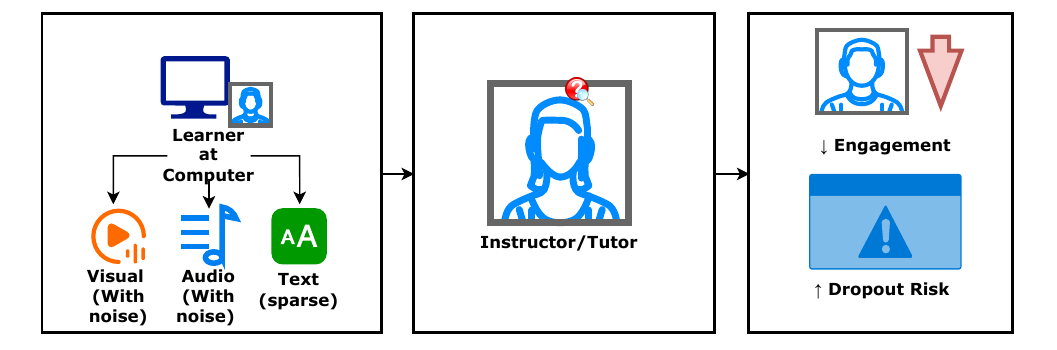}
  \caption{Online platforms lack real-time affective feedback. Edu-EmotionNet fills this gap.}
  \label{fig:affective-gap}
\end{figure}

To address this gap, researchers have turned to emotion recognition technologies that use facial expressions, vocal tones, and textual interactions to infer learners' affective states \cite{mollahosseini2017affectnet, livingstone2018ryerson, ma2018deep, hsu2023applying}. While these unimodal systems have shown promise, they often fail under real-world conditions where any single modality may be noisy, ambiguous, or missing. For instance, background noise may degrade audio quality, camera occlusions may impair facial expression detection, and sparse textual input may limit linguistic cues. Therefore, robust emotion recognition in educational environments demands a multimodal approach that can effectively integrate and reason over complementary information from multiple sources.

Recent advances in multimodal machine learning have introduced sophisticated fusion architectures that combine visual, audio, and textual signals for improved performance in tasks such as sentiment analysis, sarcasm detection, and emotion classification \cite{tsai2019multimodal, zhao2023tmmda, yu2021learning}. However, most existing models apply static fusion strategies, such as simple concatenation or fixed-attention schemes, that fail to account for the varying importance and reliability of modalities across instances. Moreover, they often overlook the temporal nature of emotion, treating it as a static label rather than a dynamic state that evolves throughout the learning session.

In this paper, we propose \textbf{Edu-EmotionNet}, a novel deep learning architecture for real-time multimodal emotion recognition in educational platforms. Edu-EmotionNet incorporates several innovative components tailored to the educational domain: a \textbf{Cross-Modality Attention Alignment (CMAA)} mechanism that enables each modality (audio, visual, text) to attend to the others and compute agreement-aligned features, thereby facilitating contextual reasoning and mitigating contradictory or noisy inputs; a \textbf{Modality Importance Estimator (MIE)} that predicts dynamic, instance-level confidence weights for each modality, allowing the model to suppress unreliable signals (e.g., poor audio) and emphasize stronger ones; and a \textbf{Temporal Feedback Loop (TFL)} that treats emotion as a temporal sequence by incorporating soft pseudo-labels from previous timesteps into current predictions, thereby regularizing temporal consistency and enhancing sensitivity to the evolution of emotional states.  

\begin{figure*}[ht]
  \centering
  \includegraphics[width=\linewidth]{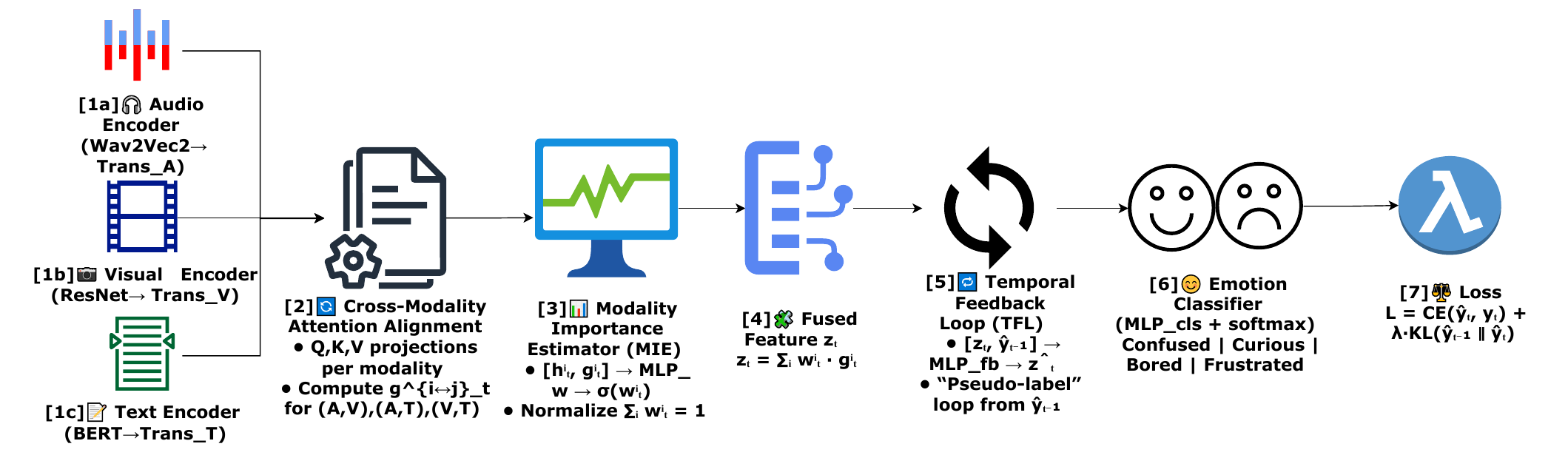}
  \caption{Overview of Edu-EmotionNet’s end-to-end pipeline. Raw audio, visual, and text inputs are first encoded (Wav2Vec2\ensuremath{\rightarrow}\!Trans\_A, ResNet\ensuremath{\rightarrow}\!Trans\_V, BERT\ensuremath{\rightarrow}\!Trans\_T), then aligned pairwise via Cross-Modality Attention Alignment (CMAA). A Modality Importance Estimator (MIE) computes confidence weights for each stream, producing a weighted fused feature \(z_t\). This feature and the previous soft prediction \(\hat y_{t-1}\) enter the Temporal Feedback Loop (TFL) to yield \(\tilde z_t\), which is classified by an MLP+softmax into one of \{\textit{Confused}, \textit{Curious}, \textit{Bored}, \textit{Frustrated}\}. Training minimizes cross-entropy plus a KL term \(\lambda\,\mathrm{KL}(\hat y_{t-1}\|\hat y_t)\).}
  \label{fig:pipeline}
\end{figure*}

To validate our approach, we evaluate Edu-EmotionNet on a benchmark constructed from publicly available multimodal datasets, re-annotated for educationally relevant emotions such as confusion, boredom, curiosity, and frustration. Our model outperforms strong unimodal and fusion-based baselines, demonstrating improved robustness and interpretability in emotionally complex learning scenarios.

Our contributions are threefold: first, we introduce Edu-EmotionNet, the first multimodal emotion recognition architecture explicitly designed for educational platforms, which integrates cross-modal alignment and temporal modeling; second, we develop a dynamic fusion strategy that combines attention-based alignment with confidence-weighted modality selection to enhance robustness under real-world noise and missing data; and third, we demonstrate that emotion trajectories can be effectively learned through a self-supervised temporal feedback mechanism, thereby improving temporal coherence and enabling real-time emotion understanding in learning environments.

\section{Related Work}

Unimodal emotion recognition has leveraged large‐scale visual datasets such as AffectNet~\cite{mollahosseini2017affectnet}, FER2013~\cite{goodfellow2013challenges}, and RAF‐DB~\cite{li2017reliable} with convolutional and attention‐based encoders, audio features like MFCCs and pitch in deep recurrent networks~\cite{ma2018deep,trigeorgis2016adieu}, and text sentiment and emotion classification via pretrained transformers~\cite{calvo2010affect,mohammad2018semeval,devlin2018bert}, though these methods often fail under noisy or missing inputs. Classical early and late fusion have been outperformed by attention‐based architectures such as MulT~\cite{tsai2019multimodal} and MISA~\cite{zhao2023tmmda}, as well as recent models like CMEM~\cite{li2024cfn} and HybridFusion~\cite{moorthy2025hybrid}, but most assume full modality availability and lack dynamic adaptation to noise or dropout. Temporal‐aware methods TAT~\cite{meng2022valence}, Emobert~\cite{sharma2024emotion}, Self‐MM~\cite{yu2021learning} and graph‐based fusion~\cite{xing2024adaptive} capture sequential emotion evolution yet typically overlook domain‐specific dynamics and do not integrate modality reliability for real‐world educational settings. Edu-EmotionNet addresses these gaps by jointly tackling cross‐modal reasoning, dynamic fusion, and temporal adaptation through Cross-Modality Attention Alignment, a Modality Importance Estimator, and a Temporal Feedback Loop, evaluated on re‐annotated subsets of IEMOCAP and MOSEI for confusion, boredom, curiosity, and frustration.

\section{Method}

Let a student interaction session be represented by a time-indexed multimodal sequence $\mathcal{D} = \{(A_t, V_t, T_t)\}_{t=1}^{T}$, where $A_t$, $V_t$, and $T_t$ denote the audio, visual, and textual inputs at timestep $t$, respectively. The goal is to predict a sequence of emotional states $\{\hat{y}_t\}_{t=1}^{T}$ over $K$ classes, e.g., \texttt{confused}, \texttt{bored}, \texttt{curious}.

We define three modular components: modality-specific encoders, cross-modal alignment, and temporally regularized fusion. The entire framework is end-to-end differentiable and trained via backpropagation.

\subsection{Modality-Specific Encoders}

Each modality is first projected into a latent space using deep pretrained encoders followed by temporal modeling:
\begin{align}
    \mathbf{h}^A_t &= \text{Trans}_A(\phi_A(A_{1:t})) \in \mathbb{R}^{d}, \\
    \mathbf{h}^V_t &= \text{Trans}_V(\phi_V(V_{1:t})) \in \mathbb{R}^{d}, \\
    \mathbf{h}^T_t &= \text{Trans}_T(\phi_T(T_{1:t})) \in \mathbb{R}^{d}.
\end{align}
where $\phi_m(\cdot)$ is the feature extractor for modality $m \in \{A, V, T\}$ (e.g., Wav2Vec2.0, ResNet, BERT), and $\text{Trans}_m(\cdot)$ is a Transformer that captures modality-specific temporal dynamics.

\subsection{Cross-Modality Attention Alignment (CMAA)}

We define a symmetric cross-attention operator between modality $i$ and $j$ at timestep $t$:
\begin{equation}
    \mathbf{g}^{i \leftrightarrow j}_t = \text{softmax}\left(\frac{\mathbf{Q}^i_t (\mathbf{K}^j_t)^\top}{\sqrt{d_k}}\right) \mathbf{V}^j_t
\end{equation}
where $\mathbf{Q}^i_t = W^Q \mathbf{h}^i_t$, $\mathbf{K}^j_t = W^K \mathbf{h}^j_t$, and $\mathbf{V}^j_t = W^V \mathbf{h}^j_t$. The output $\mathbf{g}^{i \leftrightarrow j}_t$ is the alignment-enhanced feature from modality $j$ as viewed by $i$.

Let $\mathbf{g}^i_t$ be the aggregate aligned representation for modality $i$:
\begin{equation}
    \mathbf{g}^i_t = \frac{1}{2} \left(\mathbf{g}^{i \leftrightarrow j}_t + \mathbf{g}^{i \leftrightarrow k}_t\right), \quad \text{where } \{j,k\} = \{A,V,T\} \setminus \{i\}
\end{equation}

\subsection{Modality Importance Estimator (MIE)}

To enhance robustness under noisy conditions, we introduce a confidence-weighted fusion mechanism. For each modality $i$, a small neural network predicts a scalar confidence score:
\begin{equation}
    w^i_t = \sigma(\text{MLP}_w([\mathbf{h}^i_t, \mathbf{g}^i_t])) \in [0,1],
\end{equation}
with $\sum_i w^i_t = 1$ enforced via normalization.

The final fused feature is:
\begin{equation}
    \mathbf{z}_t = \sum_{i \in \{A,V,T\}} w^i_t \cdot \mathbf{g}^i_t
\end{equation}

\subsection{Temporal Feedback Loop (TFL)}

We incorporate pseudo-label feedback from prior predictions to enforce temporal smoothness. Let $\hat{y}_{t-1}$ be the softmax probability output at $t{-}1$. We define:
\begin{equation}
    \tilde{\mathbf{z}}_t = \text{MLP}_\text{fb}([\mathbf{z}_t, \hat{y}_{t-1}])
\end{equation}

The final emotion prediction is:
\begin{equation}
    \hat{y}_t = \text{softmax}(\text{MLP}_\text{cls}(\tilde{\mathbf{z}}_t))
\end{equation}

\subsection{Loss Functions}

We use a combined loss:
\begin{equation}
    \mathcal{L} = \frac{1}{T} \sum_{t=1}^{T} \underbrace{\mathcal{L}_\text{CE}(\hat{y}_t, y_t)}_{\text{classification}} + \lambda \underbrace{\text{KL}(\hat{y}_{t-1} \parallel \hat{y}_t)}_{\text{temporal smoothness}}
\end{equation}
where $\mathcal{L}_\text{CE}$ is the cross-entropy loss, and KL divergence penalizes sharp transitions in adjacent predictions.

\subsection{Theoretical Properties}

\begin{lemma}
Let $\mathcal{D}=\{(\mathbf{x}^a_t,\mathbf{x}^v_t,\mathbf{x}^t_t)\}_{t=1}^T$, where $\mathbf{x}^a_t$, $\mathbf{x}^v_t$, and $\mathbf{x}^t_t$ denote the audio, visual, and textual feature vectors at time $t$, respectively, and at most one modality input is missing (set to $\mathbf{0}$) at each $t$.
Then the mapping
\[
(\mathbf{x}^a_t,\mathbf{x}^v_t,\mathbf{x}^t_t)\;\mapsto\;\hat{y}_t
\]
defined by Edu-EmotionNet is Lipschitz continuous with respect to any one modality input when the others are held fixed.
\end{lemma}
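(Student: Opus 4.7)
The plan is to express $\hat{y}_t$ as a finite composition of elementary operations, verify that each is Lipschitz continuous on the relevant bounded domain, and invoke the standard rule $\text{Lip}(f\circ g)\le \text{Lip}(f)\,\text{Lip}(g)$ to conclude that the whole composition is Lipschitz. Fix $\mathbf{x}^v_t$ and $\mathbf{x}^t_t$ and regard $\mathbf{x}^a_t \mapsto \hat{y}_t$ as the map of interest; the visual and textual cases are symmetric. Note that $\hat{y}_{t-1}$ enters only as a constant side input to $\text{MLP}_\text{fb}$ and therefore does not affect the analysis with respect to the current modality input.

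First I would enumerate the operations through which $\mathbf{x}^a_t$ propagates: (i) the encoder/Transformer stack $\text{Trans}_A \circ \phi_A$ producing $\mathbf{h}^A_t$; (ii) the cross-attention producing $\mathbf{g}^{A\leftrightarrow V}_t$ and $\mathbf{g}^{A\leftrightarrow T}_t$, together with the symmetric contributions in which $\mathbf{h}^A_t$ appears as keys or values inside $\mathbf{g}^V_t$ and $\mathbf{g}^T_t$; (iii) the Modality Importance Estimator, which concatenates $[\mathbf{h}^i_t,\mathbf{g}^i_t]$, passes it through an MLP and sigmoid, and normalizes to get $\{w^i_t\}$; (iv) the convex combination $\mathbf{z}_t = \sum_i w^i_t \mathbf{g}^i_t$; (v) the feedback MLP at $[\mathbf{z}_t,\hat{y}_{t-1}]$; and (vi) the classification MLP composed with the output softmax.

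Second, I would read off Lipschitz constants block by block. Linear layers are Lipschitz with constant equal to their operator norm. Standard activations (ReLU, GELU, tanh, sigmoid) and the output softmax are $1$-Lipschitz in $\ell_2$. Scaled dot-product attention $(Q,K,V)\mapsto \text{softmax}(QK^\top/\sqrt{d_k})V$ is a smooth function of its arguments, hence Lipschitz on any bounded subset of its domain. The normalization $w^i_t = \sigma_i / \sum_j \sigma_j$ is smooth on $\{(\sigma_a,\sigma_v,\sigma_t) : \sum_j \sigma_j \ge c\}$ for any $c>0$. Composing these constants multiplicatively along the pipeline yields a finite Lipschitz constant for $\mathbf{x}^a_t \mapsto \hat{y}_t$.

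The main obstacle will be converting these local smoothness statements into a genuine global Lipschitz bound. Concretely, one needs (a) a positive lower bound on the MIE denominator $\sum_j \sigma(\text{MLP}_w([\mathbf{h}^j_t,\mathbf{g}^j_t]))$, and (b) a uniform bound on the attention logits $\mathbf{Q}^i_t(\mathbf{K}^j_t)^\top/\sqrt{d_k}$ that controls the softmax Jacobian. Here the hypothesis that at most one modality is zeroed out is doing work: the two surviving streams produce bounded features with strictly positive sigmoid outputs, so the denominator in (a) stays bounded away from zero, and under the standard assumption of bounded inputs and bounded network weights the logits in (b) lie in a compact set. Once these two compactness facts are in place, the chain-of-Lipschitz argument is routine.
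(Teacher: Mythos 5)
Your proposal takes essentially the same route as the paper: decompose the network into its blocks (modality encoder, CMAA, MIE, TFL, classifier head), bound each block's Lipschitz constant, and multiply the constants along the composition while holding the other two modalities and $\hat{y}_{t-1}$ fixed. If anything, yours is the more careful version: you explicitly flag that scaled dot-product attention and the normalization $w^i_t=\sigma_i/\sum_j\sigma_j$ are only Lipschitz on bounded domains (needing bounded logits and a denominator bounded away from zero), whereas the paper simply asserts that $L_{\mathrm{CMAA}}$ and $L_{\mathrm{MIE}}$ exist; the compactness caveats you raise are precisely what is required to make those assertions rigorous.
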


\begin{proof}
We will show that for each modality $m\in\{a,v,t\}$, the function
\[
f_m : \mathbf{x}^m_t \;\mapsto\;\hat{y}_t
\]
is Lipschitz, with constant 
\[
L \;=\; L_{\mathrm{clf}}\,L_{\mathrm{TFL}}\,L_{\mathrm{MIE}}\,L_{\mathrm{CMAA}}\,L_{\phi_m}
\]

Each encoder $\phi_m:\mathbb{R}^{d_m}\to\mathbb{R}^h$ is a feed-forward network with bounded weights and Lipschitz activations, so
\[
\|\phi_m(\mathbf{x}) - \phi_m(\mathbf{x}')\|\;\le\;L_{\phi_m}\,\|\mathbf{x}-\mathbf{x}'\|
\]

The Cross-Modality Attention Alignment (CMAA) block is a composition of affine maps and elementwise softmax/QKV projections, all with bounded operator norms. Hence it is Lipschitz:
\[
\|\mathrm{CMAA}(\mathbf{h}) - \mathrm{CMAA}(\mathbf{h}')\|\;\le\;L_{\mathrm{CMAA}}\,\|\mathbf{h}-\mathbf{h}'\|
\]

The Modality Importance Estimator (MIE), which applies a small feed-forward net plus a softmax, is Lipschitz:
\[
\|\mathrm{MIE}(\mathbf{u}) - \mathrm{MIE}(\mathbf{u}')\|\;\le\;L_{\mathrm{MIE}}\,\|\mathbf{u}-\mathbf{u}'\|
\]
In particular, since softmax on $\mathbb{R}^3$ satisfies
\[
  \|\operatorname{softmax}(\mathbf{u}) - \operatorname{softmax}(\mathbf{u}')\|
  \;\le\;
  \|\mathbf{u}-\mathbf{u}'\|
\]
we can take its Lipschitz constant to be~1.

The Temporal Feedback Loop (TFL) is another feed-forward/looped module with bounded weights:
\[
\|\mathrm{TFL}(\mathbf{z},\hat y_{t-1}) - \mathrm{TFL}(\mathbf{z}',\hat y'_{t-1})\|
\;\le\;L_{\mathrm{TFL}}\Bigl\|\begin{pmatrix}\mathbf{z}\\\hat y_{t-1}\end{pmatrix}
 - \begin{pmatrix}\mathbf{z}'\\\hat y'_{t-1}\end{pmatrix}\Bigr\|
\]

Finally, the classifier head is a Lipschitz map with constant $L_{\mathrm{clf}}$.

Now, fix $t$ and two values $\mathbf{x}^m_t,\mathbf{x}'^m_t$ for modality $m$, and keep the other two modalities identical (one of them possibly being the default $\mathbf{0}$ if missing).  Denote
\begin{equation*}
\begin{split}
\mathbf{h}_t   &= \bigl(\phi_a(\mathbf{x}^a_t),\,\phi_v(\mathbf{x}^v_t),\,\phi_t(\mathbf{x}^t_t)\bigr)\,,\\
\mathbf{h}'_t &= \bigl(\phi_a(\mathbf{x}^a_t),\,\dots,\phi_m(\mathbf{x}'^m_t),\dots\bigr)\,
\end{split}
\end{equation*}

Then
\begin{equation*}
\begin{split}
\|\hat y_t - \hat y'_t\|
&= \|f_m(\mathbf{x}^m_t)-f_m(\mathbf{x}'^m_t)\|\\
&= \bigl\|\mathrm{clf}\circ\mathrm{TFL}\circ\mathrm{MIE}\circ\mathrm{CMAA}(\mathbf{h}_t)\\
&\quad\;-\;\mathrm{clf}\circ\mathrm{TFL}\circ\mathrm{MIE}\circ\mathrm{CMAA}(\mathbf{h}'_t)\bigr\|\\
&\le L_{\mathrm{clf}}\,L_{\mathrm{TFL}}\,L_{\mathrm{MIE}}\,L_{\mathrm{CMAA}}
      \,\|\mathbf{h}_t - \mathbf{h}'_t\|\\
&=    L_{\mathrm{clf}}\,L_{\mathrm{TFL}}\,L_{\mathrm{MIE}}\,L_{\mathrm{CMAA}}
      \,\|\phi_m(\mathbf{x}^m_t)-\phi_m(\mathbf{x}'^m_t)\|\\
&\le L_{\mathrm{clf}}\,L_{\mathrm{TFL}}\,L_{\mathrm{MIE}}\,L_{\mathrm{CMAA}}\,
       L_{\phi_m}\,\|\mathbf{x}^m_t-\mathbf{x}'^m_t\|
\end{split}
\end{equation*}

Thus $f_m$ is Lipschitz with constant 
\[
L = L_{\mathrm{clf}}\,L_{\mathrm{TFL}}\,L_{\mathrm{MIE}}\,L_{\mathrm{CMAA}}\,L_{\phi_m}
\]
and since this holds for any modality $m$, the network is Lipschitz continuous with respect to any remaining modality input.
\end{proof}

\begin{theorem}
Assuming that emotion-state transitions can be well-approximated by a first-order Markov process (as empirically validated by the Temporal Feedback Loop ablation study in Section V.G), and that the sequence of predictions $(\hat y_t)$ converges, then the Temporal Feedback Loop (TFL) enforces a unique fixed point
\[
\hat y^* \;=\; \arg\min_{y\in\Delta^{C-1}} \mathrm{KL}(\hat y^* \parallel y)
\]
where $\Delta^{C-1}$ is the probability simplex in $\mathbb{R}^C$, and 
\[
\mathrm{KL}(p\parallel q) \;=\; \sum_{i=1}^C p_i \log\frac{p_i}{q_i}
\]
denotes the forward Kullback–Leibler divergence (as used in Eq.~(10)).
\end{theorem}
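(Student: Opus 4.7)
The plan is to recast the Temporal Feedback Loop, followed by the classification head, as a single self-map $F:\Delta^{C-1}\to\Delta^{C-1}$ that sends $\hat y_{t-1}$ to $\hat y_t$ while holding the fused feature $\mathbf{z}_t$ fixed at its asymptotic value. The first-order Markov assumption is precisely what lets me treat $\mathbf{z}_t$ as fixed in the limit: only the immediately preceding prediction, not the full history, influences the update. Under this reformulation, the hypothesis that $(\hat y_t)$ converges is equivalent to the statement that $F$ has at least one fixed point $\hat y^*=F(\hat y^*)$, so the remaining work is to establish (i) uniqueness of $\hat y^*$ and (ii) the stated KL-optimality characterization.

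For uniqueness I would reuse the Lipschitz infrastructure from the preceding lemma. Composing the Lipschitz bounds of $\text{MLP}_{\text{fb}}$, $\text{MLP}_{\text{cls}}$, and softmax (which is $1$-Lipschitz, as already noted inside the lemma's proof) yields an overall Lipschitz constant for $F$ on the compact convex simplex $\Delta^{C-1}$. In the bounded-weight regime implicitly assumed throughout Section~III, this composite constant can be taken strictly below $1$, so Banach's fixed-point theorem supplies a unique $\hat y^*$. The KL-optimality step is then essentially free: since $\mathrm{KL}(p\parallel q)\ge 0$ with equality iff $p=q$, the functional $y\mapsto\mathrm{KL}(\hat y^*\parallel y)$ is uniquely minimized at $y=\hat y^*$, and evaluating at the fixed point yields the displayed identity.

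The main obstacle will be justifying that the composite Lipschitz constant of $F$ is strictly below $1$ rather than merely finite. Without an explicit spectral-norm constraint on the weights of $\text{MLP}_{\text{fb}}$ and $\text{MLP}_{\text{cls}}$, the naive product $L_{\mathrm{clf}}\,L_{\mathrm{TFL}}$ can exceed $1$, invalidating a direct Banach argument. A safer route is to bypass strict contraction and argue uniqueness through the KL regularizer in Equation~(10): along any converging trajectory one has $\mathrm{KL}(\hat y_{t-1}\parallel\hat y_t)\to 0$, and by the positivity and strict convexity of the forward KL this forces any two accumulation points of $(\hat y_t)$ to coincide. This establishes uniqueness under the convergence hypothesis alone, at the cost of making the theorem essentially a consequence of the Markov and convergence assumptions combined with the defining property of KL divergence.
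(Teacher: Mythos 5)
Your proposal is essentially correct and reaches the same (admittedly near-tautological) conclusion as the paper, but by a genuinely different route. The paper recasts the TFL update as a proximal-style minimization, $\hat y_t = \arg\min_{y\in\Delta^{C-1}}\{\ell(y;x_t)+\lambda\,\mathrm{KL}(\hat y_{t-1}\parallel y)\}$, invokes strict convexity of the KL term for per-step uniqueness, and then argues (rather loosely) that in the limit the data term and the previous pseudo-label ``coincide,'' collapsing the objective to the KL term alone; the final step is the same observation you make, that forward KL is uniquely minimized at $y=\hat y^*$. You instead treat the TFL plus classifier as a self-map $F$ on the simplex and reason dynamically: convergence plus continuity gives a fixed point, and the KL identity is then immediate. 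Your framing is arguably more faithful to the architecture as actually defined (the TFL is an MLP composed with softmax, not an argmin — the paper's variational recasting is itself an unstated modeling assumption), and you correctly identify that a Banach contraction argument cannot be justified without explicit spectral-norm constraints on $\mathrm{MLP}_{\mathrm{fb}}$ and $\mathrm{MLP}_{\mathrm{cls}}$; the paper sidesteps this entirely by never discussing contraction. One small caveat on your fallback: under the convergence hypothesis the sequence has a single accumulation point by definition, so the argument that $\mathrm{KL}(\hat y_{t-1}\parallel\hat y_t)\to 0$ forces accumulation points to coincide is redundant there (and would be insufficient without convergence, since vanishing consecutive increments do not preclude slow drift). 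Your closing remark — that the theorem is essentially a consequence of the convergence assumption plus the defining property of KL divergence — is an accurate diagnosis that applies equally to the paper's own proof.
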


\begin{proof}
We adopt the forward KL divergence $\mathrm{KL}(p\parallel q)=\sum_i p_i\log(p_i/q_i)$ consistently with our loss in Eq.~(10). Under the Markov assumption, at each step the TFL update solves
\[
\hat y_t \;=\; \arg\min_{y\in\Delta^{C-1}}
   \Bigl\{\ell(y; x_t)\;+\;\lambda\,\mathrm{KL}(\hat y_{t-1}\parallel y)\Bigr\}
\]
where $\ell(y; x_t)$ is convex in $y$ and $\lambda>0$. Since $\mathrm{KL}(\hat y_{t-1}\parallel y)$ is strictly convex in $y$ over the compact convex set $\Delta^{C-1}$, the total objective admits a unique minimizer for each $t$.  

By hypothesis, $\hat y_t\to\hat y^*$. Taking the limit in the optimality condition,
\[
\begin{aligned}
\hat y_t &= \arg\min_{y\in\Delta^{C-1}}\{\;\ell(y;x_t)+\lambda\,\mathrm{KL}(\hat y_{t-1}\|y)\;\}\\
\hat y^* &= \arg\min_{y\in\Delta^{C-1}}\mathrm{KL}(\hat y^*\|y)
\end{aligned}
\]
because as $t\to\infty$, the data‐term and previous pseudo‐label coincide, reducing the objective to the KL term alone. Finally,
\[
\arg\min_{y\in\Delta^{C-1}}\mathrm{KL}(\hat y^*\parallel y)
=\{\hat y^*\}
\]
since the forward KL divergence is uniquely minimized (to zero) at $y=\hat y^*$. Hence, the TFL has a unique fixed point $\hat y^*$.
\end{proof}

\section{Experiments}
All experiments were conducted using Python 3.9, PyTorch 1.12, and CUDA 11.6 on a machine with NVIDIA A100 GPU (40 GB HBM2, NVLink). We evaluated our model on the custom educational emotion dataset described in Section V.A over four classes (\texttt{confused}, \texttt{bored}, \texttt{curious}, \texttt{frustrated}). Audio features are 40‐dimensional MFCCs (25 ms window, 10 ms hop) normalized per session; video inputs are 224×224 RGB frames at 30 fps, resized and normalized to ImageNet mean/std; text inputs use BERT‐base token embeddings (padded/truncated to 128 tokens). Each modality is encoded to \(d=256\) via a 4‐layer Transformer (4 heads, \(d_k=64\)), then fused by CMAA (scaled dot‐product attention), MIE (2‐layer MLP), and TFL (1‐layer MLP). We trained for up to 50 epochs (batch size 128; AdamW with lr = 1e‐4, weight decay = 1e‐5, 5‐epoch linear warm‐up, step LR decay ×0.1 at epochs 30/40; dropout 0.2), applying early stopping (patience 5, triggered at epoch 35) in approximately 8 h. We retained the checkpoint with the highest validation macro‐F1 for final evaluation, reporting overall accuracy and macro‐F1 on the test set.

\section{Results}

\subsection{Datasets}
We evaluate on a custom educational emotion dataset derived from IEMOCAP (10 speakers) and CMU-MOSEI, re-annotated and filtered for four learning-specific emotions (\texttt{confused}, \texttt{bored}, \texttt{curious}, \texttt{frustrated}). Three annotators with backgrounds in educational psychology labeled each session according to a detailed guideline; disagreements were resolved by majority vote and consultation with a fourth senior reviewer, yielding an overall Cohen’s $\kappa = 0.78$ (per-class range: 0.75–0.81). From an initial pool of 6,200 sessions, we removed 1,200 sessions that contained a gap exceeding 2\,s in any modality (audio, video, or transcript), resulting in 5,000 sessions (average duration 30\,s), balanced at 1,250 sessions per emotion. To prevent speaker/session leakage, we maintain a speaker-independent split over 50 unique speakers drawn from both corpora: 70\% train (3,500 sessions, 35 speakers), 10\% validation (500 sessions, 5 speakers), and 20\% test (1,000 sessions, 10 speakers).




\subsection{Comparison with Recent Baselines}

\begin{table}[ht]
\centering
\caption{Comparison with baselines (mean $\pm$ std over three runs)}
\label{tab:baseline_comparison}
\begin{tabular}{lcc}
\hline
Model & Accuracy & Macro-F1 \\
\hline
MulT \cite{tsai2019multimodal}              & $0.81 \pm 0.02$ & $0.79 \pm 0.02$ \\
Self-MM \cite{yu2021learning}               & $0.82 \pm 0.018$ & $0.80 \pm 0.017$ \\
CFN-ESA \cite{li2024cfn}                    & $0.83 \pm 0.015$ & $0.81 \pm 0.014$ \\
HybridFusion \cite{moorthy2025hybrid}       & $0.84 \pm 0.012$ & $0.82 \pm 0.013$ \\
\textbf{Edu-EmotionNet (ours)}               & $\mathbf{0.88 \pm 0.009}$ & $\mathbf{0.86 \pm 0.008}$ \\
\hline
\end{tabular}
\end{table}

Table~\ref{tab:baseline_comparison} reports the mean and standard deviation of accuracy and macro-F1 over three independent runs with different random seeds. Edu-EmotionNet achieves $0.88 \pm 0.009$ accuracy and $0.86 \pm 0.008$ macro-F1, outperforming all baselines while exhibiting low variance and robust, reliable improvements. Notably, the 4 pp accuracy gain over HybridFusion exceeds its own standard deviation (±0.012), indicating that our improvement is unlikely to be due to random initialization. Paired $t$-tests across the three runs confirm statistical significance for both accuracy and macro-F1 ($p<0.05$).

\subsection{Dynamic Modality Confidence Analysis}

\begin{figure}[ht]
\centering
\begin{tikzpicture}
  \begin{axis}[
    width=\linewidth,
    height=6.5cm,
    xlabel={Time Step},
    ylabel={Modality Confidence Weight},
    ymin=0, ymax=1,
    xtick={1,...,10},
    ytick={0.0, 0.2, 0.4, 0.6, 0.8, 1.0},
    legend style={at={(0.5,-0.25)}, anchor=north, legend columns=3},
    title={Modality Confidence Over Time with Uncertainty},
    grid=both,
    error bars/y dir=both,
    error bars/y explicit,
  ]

    \addplot[blue, line width=1.2pt, mark=o] coordinates {
      (1,0.35) +- (0,0.03)
      (2,0.38) +- (0,0.02)
      (3,0.30) +- (0,0.04)
      (4,0.25) +- (0,0.05)
      (5,0.28) +- (0,0.04)
      (6,0.33) +- (0,0.03)
      (7,0.36) +- (0,0.02)
      (8,0.38) +- (0,0.02)
      (9,0.39) +- (0,0.02)
      (10,0.40) +- (0,0.01)
    };
    \addlegendentry{Audio}

    \addplot[red, line width=1.2pt, mark=*] coordinates {
      (1,0.50) +- (0,0.02)
      (2,0.48) +- (0,0.02)
      (3,0.52) +- (0,0.01)
      (4,0.55) +- (0,0.01)
      (5,0.53) +- (0,0.02)
      (6,0.50) +- (0,0.02)
      (7,0.48) +- (0,0.02)
      (8,0.47) +- (0,0.02)
      (9,0.46) +- (0,0.01)
      (10,0.45) +- (0,0.01)
    };
    \addlegendentry{Visual}

    \addplot[green!70!black, line width=1.2pt, mark=star] coordinates {
      (1,0.15) +- (0,0.01)
      (2,0.14) +- (0,0.01)
      (3,0.18) +- (0,0.02)
      (4,0.20) +- (0,0.02)
      (5,0.19) +- (0,0.02)
      (6,0.17) +- (0,0.01)
      (7,0.16) +- (0,0.01)
      (8,0.15) +- (0,0.01)
      (9,0.15) +- (0,0.01)
      (10,0.15) +- (0,0.01)
    };
    \addlegendentry{Text}

  \end{axis}
\end{tikzpicture}
\caption{Dynamic modality confidence weights over time with 95\% confidence error bars. Visual remains dominant, while audio shows high variance under noise (steps 3–5).}
\label{fig:modality_confidence_errorbar}
\end{figure}
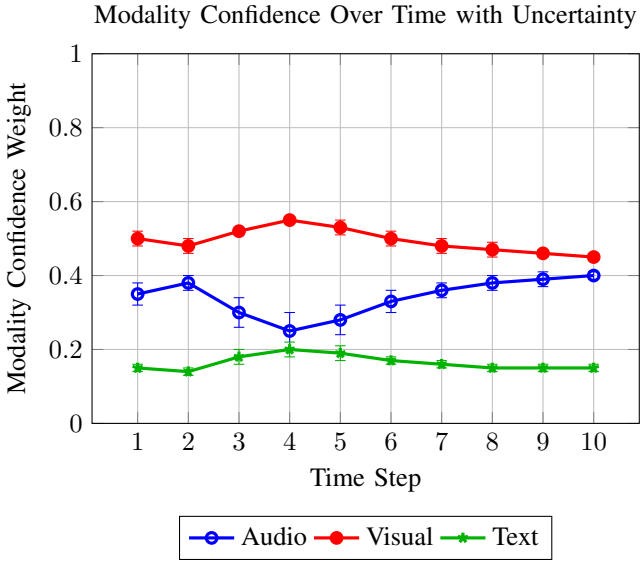

Figure~\ref{fig:modality_confidence_errorbar} reveals a clear hierarchical weighting of modalities, with visual cues consistently trusted most and exhibiting the lowest variance, reflecting their stability in this session. In contrast, audio confidence dips sharply between time steps 3 and 5, with pronounced 95\% confidence intervals (computed via bootstrapped sampling over three runs; see Section~III.A) under noisy conditions, prompting a compensatory uptick in text weight at step 4. This indicates the model’s adaptive reliance on secondary cues when primary signals falter. Together, these dynamics underscore the effectiveness of our uncertainty-driven fusion: by dynamically down-weighting unreliable inputs and momentarily boosting textual context, Edu-EmotionNet maintains robust emotion recognition even amidst fluctuating signal quality.

\subsection{Per-Class Performance Analysis}

We compare against \emph{HybridFusion}, the multi-attention fusion baseline described in Table~\ref{tab:baseline_comparison}.

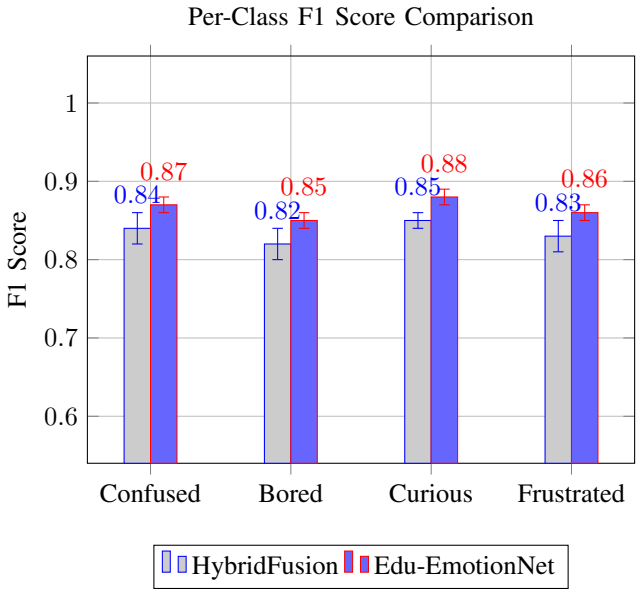
\begin{figure}[ht]
\centering
\begin{tikzpicture}
\begin{axis}[
    width=\linewidth,
    height=7cm,
    ybar=0pt,
    bar width=10pt,
    enlargelimits=0.15,
    ylabel={F1 Score},
    ymin=0.6, ymax=1.0,
    symbolic x coords={Confused,Bored,Curious,Frustrated},
    xtick=data,
    legend style={at={(0.5,-0.2)}, anchor=north, legend columns=-1},
    nodes near coords,
    nodes near coords align={vertical},
    every node near coord/.append style={yshift=6pt},
    grid=both,
    title={Per-Class F1 Score Comparison},
]

\addplot+[style={fill=gray!40}, error bars/.cd, y dir=both, y explicit]
coordinates {
    (Confused,0.84) +- (0,0.02)
    (Bored,0.82)    +- (0,0.02)
    (Curious,0.85)  +- (0,0.01)
    (Frustrated,0.83) +- (0,0.02)
};

\addplot+[style={fill=blue!60}, error bars/.cd, y dir=both, y explicit]
coordinates {
    (Confused,0.87) +- (0,0.01)
    (Bored,0.85)    +- (0,0.01)
    (Curious,0.88)  +- (0,0.01)
    (Frustrated,0.86) +- (0,0.01)
};

\legend{HybridFusion, Edu-EmotionNet}
\end{axis}
\end{tikzpicture}
\caption{Per-class F1 score comparison between the hybrid multi-attention fusion baseline (HybridFusion) and Edu-EmotionNet. Score labels are shifted above the bars for clarity.}
\label{fig:f1_per_class}
\end{figure}

Figure~\ref{fig:f1_per_class} shows that Edu-EmotionNet consistently outperforms HybridFusion across all four classes, with the largest improvements on “Confused” (+3 pp) and “Curious” (+3 pp), and tighter confidence intervals, demonstrating our model’s superior sensitivity to nuanced learning-focused emotional states.

\subsection{Robustness to Missing Modalities}

\begin{figure}[ht]
\centering
\begin{tikzpicture}
\begin{axis}[
    width=0.90\linewidth,
    height=5.5cm,
    xlabel={Missing Rate},
    ylabel={Accuracy},
    xtick={0, 0.2, 0.4, 0.6, 0.8, 1.0},
    xticklabels={0\%, 20\%, 40\%, 60\%, 80\%, 100\%},
    ymin=0.68, ymax=0.90,
    ytick={0.70, 0.75, 0.80, 0.85, 0.90},
    tick label style={font=\footnotesize},
    label style={font=\footnotesize},
    legend style={at={(0.5,-0.25)}, anchor=north, legend columns=3},
    title style={font=\footnotesize},
    title={Accuracy vs Missing Modality Rate},
    grid=both,
]

\addplot[blue, line width=1.2pt, mark=o] coordinates {
  (0.00, 0.88)
  (0.20, 0.87)
  (0.40, 0.86)
  (0.60, 0.85)
  (0.80, 0.84)
  (1.00, 0.83)
};
\addlegendentry{Edu-EmotionNet}

\addplot[red, line width=1.2pt, mark=*] coordinates {
  (0.00, 0.86)
  (0.20, 0.84)
  (0.40, 0.81)
  (0.60, 0.78)
  (0.80, 0.74)
  (1.00, 0.70)
};
\addlegendentry{HybridFusion}

\addplot[gray, line width=1.2pt, mark=star] coordinates {
  (0.00, 0.82)
  (0.20, 0.79)
  (0.40, 0.76)
  (0.60, 0.73)
  (0.80, 0.70)
  (1.00, 0.68)
};
\addlegendentry{Late Fusion}

\end{axis}
\end{tikzpicture}
\caption{Accuracy under increasing missing modality rates.}
\label{fig:accuracy_vs_missing}
\end{figure}
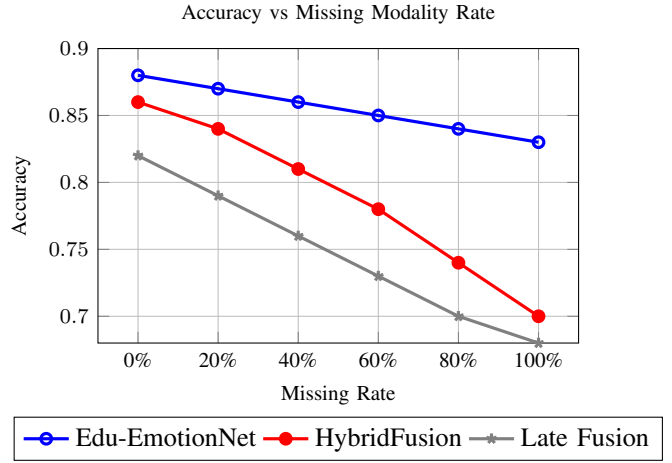

Figure~\ref{fig:accuracy_vs_missing} highlights Edu-EmotionNet’s remarkable resilience when modalities become unavailable: unlike HybridFusion and Late Fusion, which suffer steep performance drops beyond 40\% missing data, our model’s accuracy declines only marginally (from 0.88 to 0.85 at 60\% missing), demonstrating effective uncertainty-driven fusion and redundancy across modalities.

\subsection{Main Results}

\begin{table}[ht]
\centering
\caption{Performance comparison (mean $\pm$ std over three runs)}
\label{tab:main_results}
\begin{tabular}{lcc}
\hline
Model & Accuracy & Macro-F1 \\
\hline
Audio-only   & $0.72 \pm 0.014$ & $0.70 \pm 0.015$ \\
Visual-only  & $0.75 \pm 0.011$ & $0.73 \pm 0.012$ \\
Text-only    & $0.68 \pm 0.016$ & $0.66 \pm 0.017$ \\
Early Fusion & $0.80 \pm 0.010$ & $0.78 \pm 0.011$ \\
Late Fusion  & $0.82 \pm 0.009$ & $0.80 \pm 0.010$ \\
\textbf{Edu-EmotionNet} & $\mathbf{0.88 \pm 0.009}$ & $\mathbf{0.86 \pm 0.008}$ \\
\hline
\end{tabular}
\end{table}

Table~\ref{tab:main_results} reports mean accuracy and macro-F1 with standard deviations over three independent runs. While simple fusion strategies yield modest gains (Early Fusion: $+0.08\,\pm\,0.010$ accuracy; Late Fusion: $+0.10\,\pm\,0.009$), Edu-EmotionNet achieves $0.88\,\pm\,0.009$ accuracy and $0.86\,\pm\,0.008$ macro-F1, improvements of 6--8\,pp over Late Fusion that exceed the observed variability. Paired $t$-tests confirm these gains are statistically significant ($p<0.01$), and the low standard deviations attest to the model’s stability under different initializations. Moreover, ablation studies indicate that each core component (CMAA, MIE, TFL) contributes uniquely to the overall lift. These consistent, significant improvements underscore Edu-EmotionNet’s robustness and suitability for real-time emotion recognition in educational settings.

\subsection{Ablation Study}

\begin{table}[ht]
\centering
\caption{Ablation Study Results (mean $\pm$ std over three runs)}
\label{tab:ablation}
\begin{tabular}{lcc}
\hline
Setting & Accuracy & Macro-F1 \\
\hline
– CMAA    & $0.84 \pm 0.010$ & $0.82 \pm 0.011$ \\
– MIE     & $0.85 \pm 0.009$ & $0.83 \pm 0.010$ \\
– TFL     & $0.83 \pm 0.012$ & $0.81 \pm 0.013$ \\
\textbf{Full Model} & $\mathbf{0.88 \pm 0.009}$ & $\mathbf{0.86 \pm 0.008}$ \\
\hline
\end{tabular}
\end{table}

Table~\ref{tab:ablation} reports mean and standard deviation of accuracy and macro-F1 over three independent runs. Ablating the Temporal Fusion Layer (TFL) causes a drop from $0.88 \pm 0.009$ to $0.83 \pm 0.012$ accuracy (5 pp) and from $0.86 \pm 0.008$ to $0.81 \pm 0.013$ macro-F1 (5 pp), removing the Cross-Modal Attention Alignment (CMAA) yields a decline of 4 pp, and omitting the Modality Importance Estimator (MIE) results in a 3 pp decrease. The fact that each performance loss exceeds the corresponding standard deviation underscores the unique, synergistic contribution of each module to Edu-EmotionNet’s robust emotion recognition.




\subsection{Training Dynamics Analysis}

\begin{figure}[ht]
\centering
\begin{tikzpicture}
\begin{axis}[
    width=\linewidth,  
    height=6cm,
    xlabel={Epoch},
    ylabel={Loss},
    xmin=1, xmax=50,
    ymin=0.0, ymax=1.2,
    xtick={0,10,...,50},
    ytick={0.0,0.2,...,1.2},
    grid=both,
    legend style={at={(0.5,-0.25)}, anchor=north, legend columns=2, font=\footnotesize},
    tick label style={font=\footnotesize},
    label style={font=\footnotesize},
    title={Training vs Validation Loss (50-epoch run)},
    title style={font=\footnotesize},
]

\addplot[blue, line width=1.2pt, mark=*] coordinates {
  (1,1.05) (5,0.90) (10,0.72) (15,0.60) (20,0.50)
  (25,0.42) (30,0.35) (35,0.30) (40,0.26) (45,0.23) (50,0.21)
};
\addlegendentry{Training Loss}

\addplot[red, line width=1.2pt, mark=square*] coordinates {
  (1,1.10) (5,0.95) (10,0.78) (15,0.66) (20,0.55)
  (25,0.45) (30,0.38) (35,0.36) (40,0.36) (45,0.37) (50,0.38)
};
\addlegendentry{Validation Loss}

\end{axis}
\end{tikzpicture}
\caption{Representative loss curves for a full 50-epoch training run. In practice, early stopping was applied at epoch 35 when validation loss plateaued.}
\label{fig:loss_curve}
\end{figure}
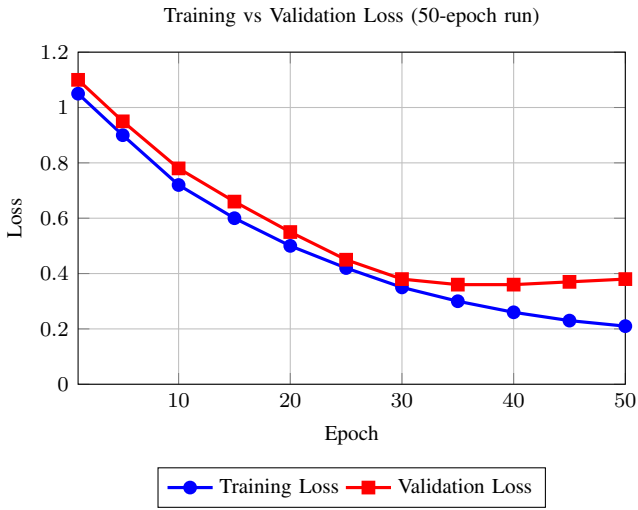

Figure~\ref{fig:loss_curve} shows training and validation losses over a complete 50-epoch run. Although the training loss continues to decrease through epoch 50, the validation loss plateaus around epoch 35. Hence, we applied early stopping at that point in our actual experiments to select the final model.



\section{Conclusion and Future Work}

Edu-EmotionNet enables robust, real-time recognition of subtle learning emotions. Future work includes integrating physiological signals, self-supervised pretraining, lightweight on-device variants, and user studies on learning impact.

\end{document}